\documentclass[letterpaper, 10 pt, journal, twoside]{IEEEtran}

\ifCLASSINFOpdf
\else
\fi
\usepackage{amsmath,amsfonts}
\usepackage{hyperref}
\usepackage{algorithmic}
\usepackage{algorithm}
\usepackage{array}
\usepackage[caption=false,font=normalsize,labelfont=sf,textfont=sf]{subfig}
\usepackage{textcomp}
\usepackage{bm}
\usepackage{stfloats}
\usepackage{cleveref}
\usepackage{makecell}
\usepackage{url}
\usepackage{verbatim}
\usepackage{graphicx}
\usepackage[normalem]{ulem}
\usepackage{amssymb}
\usepackage{cite}
\usepackage[dvipsnames]{xcolor}
\usepackage{amsthm} 
\usepackage{threeparttable}

\newtheorem{assumption}{Assumption}
\newtheorem{lemma}{Lemma}

\newtheorem{theorem}{Theorem}
\usepackage{pifont}
\newtheorem{rem}{Remark}
\usepackage{multirow}
\usepackage{tabularray}
\usepackage{booktabs}

\begin{document}
\title{Neural-ESO: A Dual-Pathway Architecture for Provably Robust Learning-Based Control}

\author{Fan Zhang$^{1,2}$, Richie Suganda$^{1,2}$, Jinfeng Chen$^{1}$, Wenhua Liu$^{1,2}$, Hantao Fu$^{3}$, Bin Hu$^{1,2}$, Qin Lin$^{1,2}$
\thanks{Manuscript received: January, 29, 2026; Revised May, 7, Year; Accepted June, 11, 2026.}
\thanks{This paper was recommended for publication by Editor Jens Kober upon evaluation of the Associate Editor and Reviewers' comments. This material is based upon work supported by the National Science Foundation under Grant No. 2525200. \textit{(Corresponding author: Qin Lin.)}}
\thanks{$^{1}$Fan Zhang, Richie Suganda, Jinfeng Chen, Wenhua Liu, Bin Hu, and Qin Lin are with the Department of Engineering Technology, University of Houston, USA. {\tt\footnotesize qlin21@central.uh.edu}}
\thanks{$^{2}$Fan Zhang, Richie Suganda, Wenhua Liu, Bin Hu, and Qin Lin are with the Department of Electrical and Computer Engineering, University of Houston, USA}
\thanks{$^{3}$Hantao Fu is with the Department of Electrical Engineering, Rice University, USA.}
\thanks{Digital Object Identifier (DOI): see top of this page.}
}

\markboth{IEEE Robotics and Automation Letters. Preprint Version. Accepted June, 2026}
{Zhang \MakeLowercase{\textit{et al.}}: Neural-ESO: A Dual-Pathway Architecture for Provably Robust Learning-Based Control}

\maketitle

\begin{abstract}
A learning-enabled disturbance-rejection framework based on a Neural Extended State Observer (Neural-ESO) is presented in this letter. Unlike existing learning-based control methods that largely rely on the learned model once deployed, Neural-ESO adopts a dual-pathway architecture: a predictive pathway uses a neural network to provide a feedforward disturbance estimate that accelerates convergence, while a corrective pathway employs a conventional ESO to compensate prediction errors and prevent over-reliance on the neural component. Using Lyapunov theory and a small-gain analysis, we show that enforcing a Lipschitz bound on the learning component guarantees uniform ultimate boundedness of the closed-loop error dynamics. The proposed framework is validated on a quadrotor landing task subject to strong ground-effect disturbances across normal and out-of-distribution scenarios, demonstrating accuracy-robustness trade-off and greater operational reliability during training, deployment, and transfer compared with state-of-the-art baselines. {Video:{\url{https://youtu.be/KVUX0SVO-dA}}. Code and example dataset: {\url{https://github.com/fzhang327/Neural-ESO}}.}

\end{abstract}
\begin{IEEEkeywords}
Machine Learning for Robot Control, Robust/Adaptive Control, Aerial Systems: Mechanics and Control
\end{IEEEkeywords}
\IEEEpeerreviewmaketitle
\section{INTRODUCTION}
\label{sec:introduction}
\IEEEPARstart{T}{ime-varying} unknown disturbances are a fundamental challenge for high-performance control of situated robots deployed in the real world. Take an unmanned aerial vehicle (UAV) as an example: such disturbances can arise from model uncertainty (\emph{e.g.}, parameter identification errors) and external effects (\emph{e.g.}, wind and payload variations). To address these issues, a key question is how to measure the discrepancy between the known (nominal) model and the true system dynamics.

Learning-based control leverages deep neural networks (DNNs) to approximate either the full dynamics or modeling discrepancies~\cite{NeuralLander,o2022neural,bauersfeld2021neurobem}. A common strategy, referred to here as \emph{direct residual learning}, learns the discrepancy by fitting the mismatch between model-predicted and measured states. For autonomous drone landing, where ground effect acts as an unknown disturbance that can prevent landing, Neural-Lander~\cite{NeuralLander} learns and compensates this disturbance using deep learning. Despite their flexibility and minimal assumptions on disturbance structure, such approaches suffer from several limitations: (i) providing formal guarantees for learning-enabled components remains a significant challenge, (ii) their performance may degrade or even become unstable under out-of-distribution (OOD) conditions when test scenarios differ significantly from training, and (iii) stability is not ensured during training transients when the residual model has not yet converged. These issues hinder their deployment in safety-critical systems\cite{brunke2022safe,dawson2023safe}.
\begin{figure}[t]
    \centering
    \includegraphics[width=0.85\columnwidth]{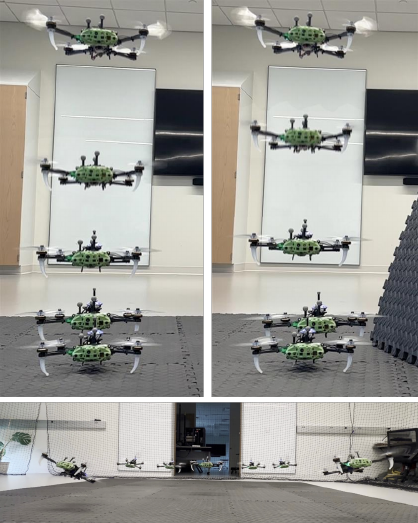}
    \caption{Experimental validation under three scenarios. (a) Top left: Normal landing, where training and testing are performed in the same setting. (b) Top right: OOD with a slope placed near the landing zone. (c) Bottom: OOD high-speed near-ground lemniscate maneuvers under nonstationary turbulence.}
    \label{fig:cover}
\end{figure}

The other line of research is rooted in control theory. For example, adaptive control provides formal guarantees for adaptation and stability. However, these approaches can be further improved if prior knowledge about the unknown disturbance is incorporated in a feedforward manner, for example through model-informed or learning-informed priors. In \cite{zhang2016active}, the authors showed that the traditional extended state observer (ESO) \cite{Han2009}, which is widely used for disturbance estimation in disturbance-rejection control, can achieve faster convergence when the disturbance can be modeled and integrated into the observer. Several recent efforts have explored combining ESOs with machine learning models \cite{liu2023cascade,li2023neural,zhang2024,yu2026parameter}. {Existing approaches often rely on relatively simple models, such as linear regression \cite{zhang2024,yu2026parameter} or radial basis function networks \cite{liu2023cascade}. More importantly, they lack a principled analysis of the risks introduced by the learning component and the corresponding requirements needed to ensure safe integration. Furthermore, they do not address how to adapt when the learned model becomes unreliable under OOD conditions.}

To systematically integrate learning into observer-based control, we address three fundamental research questions (RQs). For \emph{RQ1} (What to learn?), we adopt a disturbance-centric paradigm that approximates ESO estimates rather than direct residuals, enabling faster and model-informed predictions. For \emph{RQ2} (How to ensure stability?), we demonstrate that enforcing a Lipschitz constraint on the neural network (NN) guarantees closed-loop stability, while our dual-pathway ESO actively compensates for residual prediction errors. For \emph{RQ3} (How to generalize under OOD conditions?), we achieve safe adaptation by leveraging the combined neural and ESO estimates as the total disturbance for subsequent model retraining.

The main contributions of this letter are threefold:
\begin{enumerate}
\item We propose a disturbance-centric Neural-ESO framework with a dual-pathway architecture, enabling fast learning-based prediction while preserving robustness via online correction.
\item We establish a Lyapunov-based small-gain condition showing that a Lipschitz-bounded learning component is sufficient to guarantee Uniformly Ultimately Bounded (UUB) of the closed-loop error dynamics.
\item We demonstrate safe and effective OOD adaptation through total disturbance retraining, validated on challenging UAV landing tasks, see Fig. \ref{fig:cover}.
\end{enumerate}

The rest of the letter is organized as follows: Section \ref{sec:Preliminary} introduces the preliminaries. Section \ref{sec:Neural-ESO} presents the proposed framework. Stability analysis is demonstrated in Section \ref{sec:stability}. Hardware experiments are reported in Section \ref{sec:experiments}. Concluding remarks are in Section \ref{sec:Conslusion and future work}.
\section{Preliminaries}
\label{sec:Preliminary}
\subsection{System Dynamics and Control Law}
We consider a general robotic system with an Euler-Lagrange formulation: 
\begin{equation}
\label{eq:EL}
M\ddot{q}+C\dot{q}+{G}= u + d(q, \dot{q},t)
\end{equation}
where $q, \dot{q}, \ddot{q} \in \mathbb{R}^n$ 
denote the generalized position, velocity, and acceleration, respectively. 
$M \in \mathbb{R}^{n\times n}$, 
$C \in \mathbb{R}^{n\times n}$, and
$G \in \mathbb{R}^n$ denote the \emph{nominal} inertia matrix, Coriolis and centrifugal matrix and gravity vector;
$d(q, \dot{q}, t)$ 
represents the lumped time-varying disturbance including model uncertainties of $M$, $C$, and ${G}$ and external disturbances, and $u \in \mathbb{R}^n$ is the control input.

If $d(q, \dot{q},t)$ is neglected or compensated in \eqref{eq:EL}, the control law is designed as:
\begin{equation}
\label{eq: control law}
    u  =  {M}u_0 
    +{C} \dot{q} +{G},
\end{equation}
where the nominal control $u_0$ is designed as a reference feedforward and a Proportional-Derivative (PD) control:
\begin{equation}
\label{eq: nominal control}
\begin{split}
u_0 &=\ddot{q}^\star+k_p(q^\star-q)+k_d(\dot{q}^\star-\dot{q})\\
    \end{split},
\end{equation}
where $\ddot{q}^\star, \dot{q}^\star, q^\star \in \mathbb{R}^{n}$ represent the desired acceleration, velocity, and position of a reference joint trajectory generated by an upstream planner. $k_{p}\in \mathbb{R}^{n \times n}$ and $k_{d}\in \mathbb{R}^{n \times n}$ are the proportional and derivative gain matrices, respectively. 

\subsection{Extended State Observer Design}\label{Sec:ESO}

We use $f$ to represent the total disturbance $M^{-1}d(q, \dot{q},t)$. Then the system \eqref{eq:EL} can be rewritten into state space form:
\begin{equation}\label{eq:ESO1}
    \begin{cases}
    \begin{array}{l}
         \dot{x}_{1} = x_{2}  \\
         \dot{x}_{2} = F+Bu+f,
    \end{array}
    \end{cases}
\end{equation}
where \(x_{1}=q \); \(x_{2}=\dot{q} \); \(F={M}^{-1} (-{C} \dot{q} -{G}) \in \mathbb{R}^n\), \(B={M}^{-1}  \in \mathbb{R}^{n \times n}\), and \(f=M^{-1}d(q, \dot{q}, t)\in \mathbb{R}^n\), respectively.

By augmenting $f$ as an extended state, the augmented system can be written as:
\begin{equation}
\label{eq:augmented state space}
\begin{cases}
  \begin{array}{l}
\dot{x}_{1}=x_{2} \\
\dot{x}_{2}=F+Bu+x_{3}\\
\dot{x}_{3}=\dot{f}.
  \end{array}
\end{cases}
\end{equation}
Then, a third-order ESO to estimate the disturbance $f$ can be designed as
\begin{equation}
\label{eq: ESO}
\begin{cases}
  \begin{array}{l}
\dot{\hat{x}}_{1}=\hat{x}_{2}+\beta_{1}(x_{1}-\hat{x}_{1})\\
\dot{\hat{x}}_{2}=F+Bu+\hat{x}_{3} + \beta_{2}(x_{1}-\hat{x}_{1})\\
\dot{\hat{x}}_{3}=\beta_{3}(x_{1}-\hat{x}_{1}),
  \end{array}
\end{cases}
\end{equation}
where $\hat{x}_{1}$, $\hat{x}_{2}$, and $\hat{x}_{3}$ are the estimated $q$, $\dot{q}$, and $f$, respectively. The observer gains $\beta_{1}$, $\beta_{2}$, and $\beta_{3}$ are chosen as diagonal matrices such that the eigenvalues of the corresponding observer subsystems are assigned to $-\omega_{o}$, where $\omega_{o}$ is the observer bandwidth in the ESO design \cite{gao2003scaling}.

\section{The Proposed Neural-ESO}
\label{sec:Neural-ESO}
\begin{figure}[htbp]
    \centering
    \includegraphics[width=0.95\columnwidth]{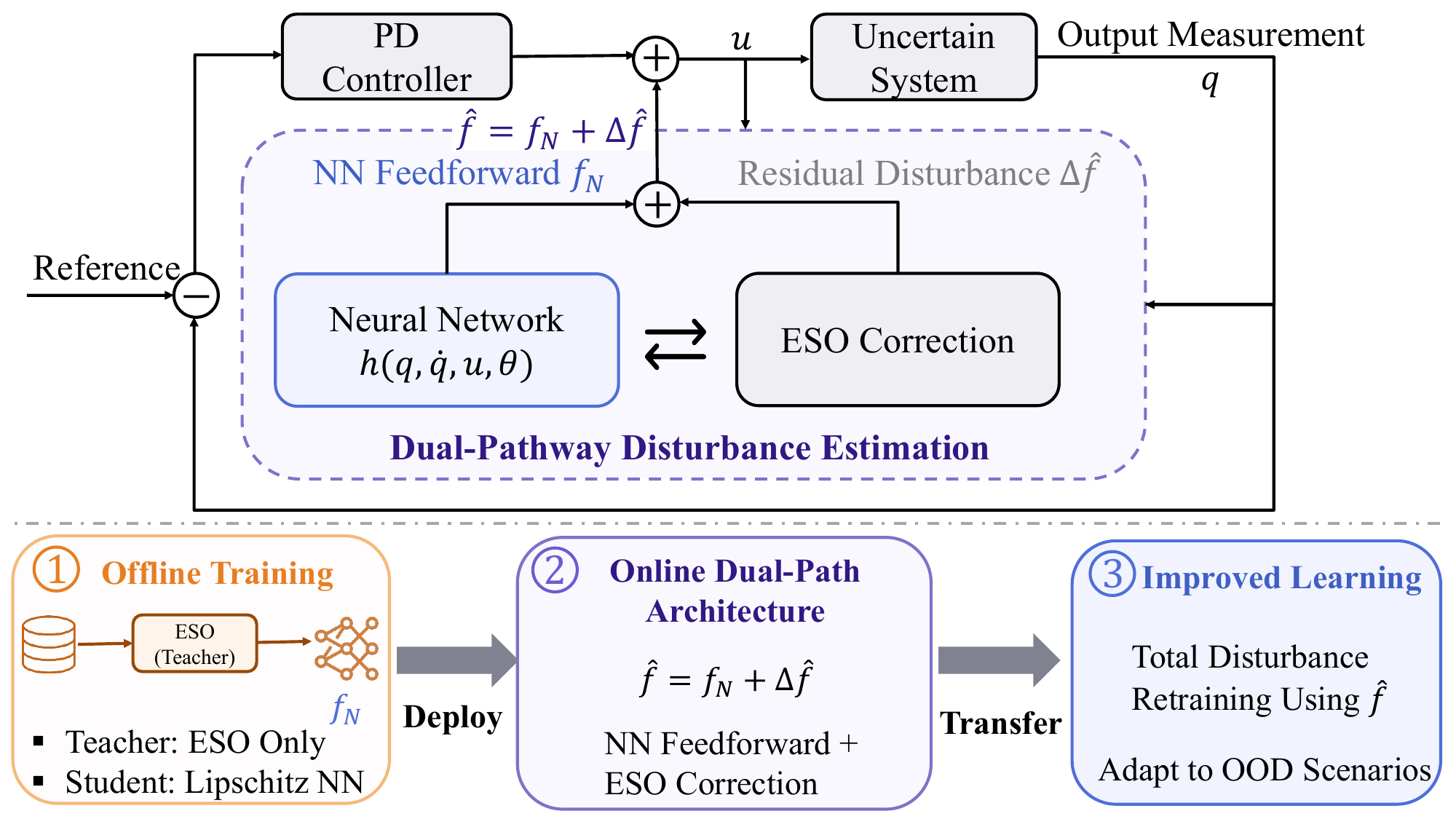}
    \caption{Proposed Neural-ESO framework with offline training (orange), online dual-pathway estimation (purple), and OOD transfer (blue).}
    \label{fig:overview}
\end{figure}

Fig.~\ref{fig:overview} shows the proposed framework. In the offline stage (orange), an ESO-only baseline is used to collect disturbance estimates for training a neural prior of the total disturbance. During online operation (purple), the neural prior provides a feedforward estimate in the predictive pathway, while a conventional ESO stays active to estimate and compensate the residual induced by prediction errors. Under OOD conditions, the Total Disturbance Retraining (TDR) block (blue) is activated. The total disturbance in the new domain, reconstructed from the neural network prediction and the ESO compensation ($\hat{f} = f_N + \Delta \hat{f}$), is safely collected and used as a training signal to retrain the neural network for performance improvement.
\label{sec:Learning based ESO}
Revisiting the observer \eqref{eq: ESO}, when the NN prior $f_N$ is incorporated, the observer only needs to estimate the residual $\Delta f = f - f_N$. In other words, the total disturbance estimate is decomposed as $\hat{f} = f_N + \Delta\hat{f}$. Accordingly, the Neural-ESO is formulated as:
\begin{equation}
\label{eq: NNESO}
\begin{cases}
  \begin{array}{l}
\dot{\hat{x}}_{1}=\hat{x}_{2}+\beta_{1}(x_{1}-\hat{x}_{1})\\
\dot{\hat{x}}_{2}=F+Bu+{f}_{N} + \Delta\hat{f}+\beta_{2}(x_{1}-\hat{x}_{1})\\
 \dot{\Delta\hat{ f}}=\beta_{3}(x_{1}-\hat{x}_{1}).
  \end{array}
\end{cases}
\end{equation}
where $\Delta\hat{f}$ becomes the new augmented state. ${f}_{N}$ is generated by a neural network parameterized by $\theta$, \emph{i.e.}, ${f}_{N}=h(z;\theta)$.
The input vector ${z}$ is defined as ${z}(t) = [x_1(t)^\top, x_2(t)^\top, u(t-T_s)^\top]^\top$, which consists of the generalized position, generalized velocity, and a one-sample delayed control input. Note that $x_1(t)$ and $x_2(t)$ are directly measurable. In contrast to \cite{NeuralLander,he2025self}, we avoid using acceleration measurements, as they are sensitive to sensor noise.

The neural network $h(z;\theta)$ is not only required to accurately approximate the underlying mapping, but also to satisfy a Lipschitz bound. This requirement constitutes one of the key insights of this letter and, to the best of our knowledge, has not been explicitly addressed in the existing literature on learning-based observers. A detailed closed-loop stability analysis is provided in Section~\ref{sec:stability}.

\subsection{The Predictive Pathway (Feedforward NN)}
\label{sec:learning}
\subsubsection{Data Collection with ESO}

As illustrated in the orange block in the overview, see Fig. \ref{fig:overview}, the offline training adopts a supervised learning setup, where a conventional ESO~\eqref{eq: ESO} serves as a \emph{teacher}. Specifically, we run the system under trajectory tracking with a properly tuned ESO \eqref{eq: ESO} to collect a dataset $\mathcal{D} = \{(z_k, y_k)\}_{k=1}^{N}$, where at each sampling instant $t_k$, the input to the neural network is recorded as
$z_k = \big[x_1(t_k)^\top,\; x_2(t_k)^\top,\; u(t_{k-1})^\top\big]^\top$,
and the corresponding label is taken from the ESO disturbance estimate, $y_k = \hat{x}_3(t_k)$, obtained from~\eqref{eq: ESO}.

\subsubsection{Constrained DNN Training with Spectral Normalization}
\label{sec: Constrained DNN}

As will be shown in the stability analysis (Section~\ref{sec:stability}), our proof relies on the disturbance derivative $\dot{\Delta f} = \dot{f} - \dot{{f}}_N$ being bounded. This requires $\dot{{f}}_N$ to be bounded, which is achieved by guaranteeing the network $h(z;\theta)$ is Lipschitz continuous.

Therefore, we adopt the constrained optimization to minimize the prediction error, subject to a hard constraint on the network's Lipschitz constant $\|h\|_{Lip} \le L_N$\cite{NeuralLander}:
\begin{equation}
\label{eq: offline_loss}
\begin{aligned}
& \underset{\theta}{\text{minimize}}
& & \frac{1}{N} \sum_{k=1}^N \| {y}_k - h({z}_k; \theta) \|^2 \\
& \text{subject to}
& & \|h\|_{Lip} \le L_N,
\end{aligned}
\end{equation}
where $L_N$ is the target Lipschitz constant for the DNN.

To enforce this constraint, we apply spectral normalization (SN) to every weight matrix in $h(z;\theta)$. SN is employed to strictly bound the network's Lipschitz constant by constraining the spectral norm of each weight matrix during optimization. We use the Adam optimizer to solve the objective in \eqref{eq: offline_loss} with SN applied. The final, optimized weights $\theta^*$ yield a feedforward estimator ${f}_{N} = h({z}; \theta^*)$. This network is then deployed in our Neural-ESO \eqref{eq: NNESO}.

\subsection{Neural-ESO Based Control Law}
We modify the nominal control $u_0$ from \eqref{eq: nominal control} to actively cancel this total estimated disturbance:
\begin{equation}
\label{eq: NNESO_u0}
u_0 =\ddot{q}^\star+k_p(q^\star-q)+k_d(\dot{q}^\star-\dot{q}) - \hat{f}.
\end{equation}
The final control law $u$ remains as defined in \eqref{eq: control law}, utilizing the updated $u_0$ from \eqref{eq: NNESO_u0}.

\section{Nonlinear Stability Analysis}
\label{sec:stability}
To prove the UUB \cite{khalil2002nonlinear} of the entire system, we employ a composite Lyapunov analysis for the full error state $E = [e_{state}^\top, \tilde{e}_{obs}^\top]^\top$. This approach is necessitated by the coupling between the tracking and observation subsystems. The analysis proceeds by: (1) deriving the error dynamics for each subsystem; (2) formally bounding the interconnection gain, which is shown to be dependent on the Lipschitz property of the trained NN; and (3) satisfying the resulting small-gain condition to prove stability.

\subsection{Tracking Error Dynamics}
\label{sec:tracking_dynamics}
We first derive the closed-loop dynamics of the tracking error $e = q^\star - q$. To analyze the closed-loop performance, we substitute the Neural-ESO control law \eqref{eq: control law} into the robot dynamics \eqref{eq:EL}:
\begin{equation}
\begin{aligned}
M\ddot{q} + C\dot{q} + G &= \left[ M(u_0) + C\dot{q} + G \right] + d(q, \dot{q},t) \\
M\ddot{q} &= M(\ddot{q}^\star + k_p e + k_d \dot{e} - \hat{f}) + d(q, \dot{q},t) \\
-M\ddot{e} &= M(k_p e + k_d \dot{e} - \hat{f}) + d(q, \dot{q},t).
\end{aligned}
\end{equation}
Since $f = M^{-1}d(q, \dot{q},t)$, we get the final error dynamics:
\begin{equation}
\label{eq: final_error_dyn}
\begin{aligned}
\ddot{e} + k_d \dot{e} + k_p e &= \hat{f} - f \\
&= ({f}_{N} + \Delta\hat{f}) - ( {f}_{N} + \Delta f ) \\
&= \Delta\hat{f} - \Delta f.
\end{aligned}
\end{equation}
We define the residual estimation error as $\tilde{f} \triangleq \Delta f - \Delta\hat{f}$. The tracking error $e$ is now driven only by $\tilde{f}$.
This second-order dynamic can be written in state-space form. Let the tracking error state be $e_{state} = [e^\top, \dot{e}^\top]^\top \in \mathbb{R}^{2n}$. The error dynamics in \eqref{eq: final_error_dyn} will be:
\begin{equation}
\label{eq:e_system}
    \dot{e}_{state} = A_e e_{state} + B_e \tilde{f},
\end{equation}
where $\tilde{f} \in \mathbb{R}^n$ is the residual estimation error vector (the input) and $A_e, B_e$ are block matrices:
\begin{equation}
A_e = 
\begin{bmatrix}
0 & I \\
-k_p & -k_d
\end{bmatrix}
, \quad
B_e = 
\begin{bmatrix}
0 \\ -I
\end{bmatrix},
\end{equation}
where  $I, 0 \in \mathbb{R}^{n \times n}$, $k_p$ and $k_d$ are diagonal positive-definite gain matrices. Since $A_e$ is Hurwitz. This subsystem is Input-to-State Stable (ISS) with respect to the input $\tilde{f}$ \cite{khalil2002nonlinear}.

\subsection{Observer Error Dynamics}
Similarly, we can derive the observer error dynamics from \eqref{eq:augmented state space} and \eqref{eq: NNESO}. Let the observer error states be:
\begin{equation}
    \tilde{x}_1 = x_1 - \hat{x}_1, \quad \tilde{x}_2 = x_2 - \hat{x}_2, \quad \tilde{f} =  \Delta f -\Delta\hat{f}.
\end{equation}
Note that $\tilde{f}$ is defined consistent with \eqref{eq: final_error_dyn}. Taking the time-derivative and substituting \eqref{eq:augmented state space} (for $x_3 = \Delta f + {f}_{N}$) and \eqref{eq: NNESO}:
\begin{equation}
\label{eq:error_deriv_mimo}
\begin{cases}
    \dot{\tilde{x}}_1  = x_2 - (\hat{x}_2 + \beta_{1} \tilde{x}_1) = \tilde{x}_2 - \beta_{1} \tilde{x}_1 \\
    \dot{\tilde{x}}_2 = (F+Bu+f) - (F+Bu+{f}_{N}+\Delta\hat{f}+\beta_{2} \tilde{x}_1) \\
    \quad ~= (f - {f}_{N} - \Delta\hat{f}) - \beta_{2} \tilde{x}_1 = \tilde{f} - \beta_{2} \tilde{x}_1 \\
    \dot{\tilde{f}}  = \dot{\Delta f} - \dot{\Delta\hat{f}} = \dot{\Delta f} - (\beta_3 \tilde{x}_1)
\end{cases}.
\end{equation}
Let the total observer error state be $\tilde{e}_{obs} = [\tilde{x}_1^\top, \tilde{x}_2^\top, \tilde{f}^\top]^\top \in \mathbb{R}^{3n}$. The dynamics can be written in state space form:
\begin{equation}
\label{eq:e_tilde_system}
    \dot{\tilde{e}}_{obs} = A_{obs} \tilde{e}_{obs} + B_{obs}  \dot{\Delta f},
\end{equation}
where $\dot{\Delta f} \in \mathbb{R}^n$ is the input, and $A_{obs}, B_{obs}$ are block matrices:
\begin{equation}
A_{obs} = 
\begin{bmatrix}
-\beta_1 & I & 0 \\
-\beta_2 & 0 & I \\
-\beta_3 & 0 & 0
\end{bmatrix}, 
\quad
B_{obs} = 
\begin{bmatrix}
0 \\ 0 \\ I
\end{bmatrix}.
\end{equation}
Since $A_{obs}$ is Hurwitz by design through pole placement, this subsystem is also ISS.
\begin{rem}\label{rem:motivation_coupling} Incorporating a prior (e.g., a neural network) has proven beneficial~\cite{zhang2016active}. In an ideal case where $f_N$ is perfect and $\dot{\Delta f}=0$, the error systems~\eqref{eq:e_system}–\eqref{eq:e_tilde_system} are autonomous and the Hurwitz property of $A_e$ and $A_{\mathrm{obs}}$ ensures global asymptotic stability. In practice, $\dot{\Delta f}\neq 0$ due to residual errors and time-varying disturbances, motivating the following UUB analysis.
\end{rem}

\subsection{Coupling Quantification for Controller and Observer}
The subsequent stability analysis hinges on quantifying the coupling between the tracking and observer subsystems. In particular, the observer error dynamics are driven by the residual derivative $\dot{\Delta f}$, while $\dot{\Delta f}$ depends on the closed-loop signals through the network input dynamics. This creates an interconnection between the tracking error $e_{\mathrm{state}}$ and the observer error $\tilde e_{\mathrm{obs}}$ via $\dot{\Delta f}$. Therefore, we establish an explicit bound on $\|\dot{\Delta f}\|$ in terms of $(e_{\mathrm{state}},\tilde e_{\mathrm{obs}})$, which yields the corresponding interconnection gains that will be used in the subsequent small-gain analysis.

We first state the mild boundedness assumptions for the system and reference trajectories.

\begin{assumption}\label{asm1_vec}
The derivative of the true disturbance is bounded: $\left\| \dot{f} \right\| \leq L_1$.
\end{assumption}

\begin{assumption}\label{asm2_vec}
The derivative of the control input is bounded: $\left\| \dot{u}(t) \right\| \leq L_u$.
\end{assumption}

\begin{rem}\label{rem:bounded_control}
In our physical implementation, this bounded-control assumption is satisfied by saturating the increment of the control input between consecutive sampling steps.
\end{rem}

\begin{assumption}\label{asm3_vec}
The desired reference trajectory and its derivatives ($q^\star, \dot{q}^\star, \ddot{q}^\star$) are bounded.
\end{assumption}

\begin{lemma}\label{lem1} Let the neural network $f_N(z)$ be $L_N$-Lipschitz. Under Assumptions \ref{asm1_vec}-\ref{asm3_vec}, the residual derivative $\|\dot{\Delta f}\|$ is linearly bounded by:
\begin{equation}
    \|\dot{\Delta f}(t)\| \leq \gamma_e \|e_{state}\| + \gamma_{\tilde{e}} \|\tilde{e}_{obs}\| + C_f,
\end{equation}
where $\gamma_e, \gamma_{\tilde{e}} > 0$ are interconnection gains proportional to $L_N$, and $C_f > 0$ is a constant.
\end{lemma}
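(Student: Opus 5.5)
We start from the decomposition $\dot{\Delta f} = \dot f - \dot f_N$ and the triangle inequality, so that $\|\dot{\Delta f}\| \le \|\dot f\| + \|\dot f_N\| \le L_1 + \|\dot f_N\|$ by Assumption~\ref{asm1_vec}. Since $f_N = h(z;\theta^*)$ is $L_N$-Lipschitz (enforced by spectral normalization in Section~\ref{sec: Constrained DNN}), and assuming $h$ is differentiable along trajectories (or interpreting $\dot f_N$ as an upper Dini derivative, which suffices for the subsequent Lyapunov analysis), we have $\|\dot f_N\| \le L_N \|\dot z\|$. With $z = [x_1^\top, x_2^\top, u(t-T_s)^\top]^\top$ we get
\begin{equation*}
\|\dot z\| \le \|\dot x_1\| + \|\dot x_2\| + \|\dot u(t-T_s)\| \le \|x_2\| + \|\ddot q\| + L_u,
\end{equation*}
using Assumption~\ref{asm2_vec} for the delayed input term. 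The remaining work is to bound $\|x_2\|$ and $\|\ddot q\|$ affinely in $\|e_{state}\|$ and $\|\tilde e_{obs}\|$.

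For the velocity, $x_2 = \dot q = \dot q^\star - \dot e$, so $\|x_2\| \le \sup_t\|\dot q^\star\| + \|e_{state}\|$ by Assumption~\ref{asm3_vec}. For the acceleration, we use the closed-loop error dynamics \eqref{eq: final_error_dyn} rather than the raw plant: $\ddot q = \ddot q^\star - \ddot e = \ddot q^\star + k_d\dot e + k_p e + \tilde f$. Since $\tilde f$ is a component of $\tilde e_{obs}$, this gives
\begin{equation*}
\|\ddot q\| \le \sup_t\|\ddot q^\star\| + \max(\|k_p\|,\|k_d\|)\sqrt{2}\,\|e_{state}\| + \|\tilde e_{obs}\|.
\end{equation*}
Equivalently, $\|\ddot q\| \le \sup_t\|\ddot q^\star\| + \|A_e\|\,\|e_{state}\| + \|\tilde e_{obs}\|$.

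Collecting terms yields the claimed bound with
\begin{equation*}
\gamma_e = L_N\big(1 + \sqrt{2}\max(\|k_p\|,\|k_d\|)\big), \qquad \gamma_{\tilde e} = L_N,
\end{equation*}
\begin{equation*}
C_f = L_1 + L_N\big(\sup\|\dot q^\star\| + \sup\|\ddot q^\star\| + L_u\big).
\end{equation*}
Both interconnection gains are proportional to $L_N$, as stated. The constant $C_f$ is positive since $L_1 > 0$ can be taken without loss of generality.

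The main obstacle is the acceleration term $\dot x_2$. Bounding it through the plant equation $F + Bu + f$ would require bounds on $u$ and on $f$ themselves, not only their derivatives, and would bring state-dependent Coriolis terms. Routing it through the closed-loop identity \eqref{eq: final_error_dyn} avoids this, because $\hat f - f = -\tilde f$ appears directly. A secondary subtlety is the non-smoothness of the Lipschitz network (e.g., ReLU activations): if needed, we would state the bound almost everywhere via Rademacher's theorem, noting that $f_N(z(t))$ is absolutely continuous with $\|\tfrac{d}{dt}f_N\| \le L_N\|\dot z\|$ a.e.
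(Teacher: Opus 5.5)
Your proof is correct and follows the paper's argument step for step. Both use the triangle inequality with Assumption~1, the chain-rule bound $\|\dot f_N\|\le L_N\|\dot z\|$, and route $\dot x_2=\ddot q$ through the closed-loop identity $\ddot q=\ddot q^\star+k_d\dot e+k_p e+\tilde f$, which yields the same $\gamma_{\tilde e}=L_N$ and $C_f=L_1+L_N(L_u+\|\dot q^\star\|+\|\ddot q^\star\|)$. Your $\gamma_e$ is actually slightly more careful than the paper's $\gamma_e=L_N\lambda_c$: the paper defines $\lambda_c$ only by $\|k_pe\|+\|k_d\dot e\|\le\lambda_c\|e_{state}\|$ and so silently drops the $\|\dot e\|$ contribution of $\dot x_1=\dot q^\star-\dot e$, which your extra ``$+1$'' accounts for.
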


\begin{proof}
The residual derivative is defined as $\dot{\Delta f} = \dot{f} - \dot{f}_N$. Using the triangle inequality and Assumption \ref{asm1_vec}:
\begin{equation}
    \|\dot{\Delta f}\| \leq \|\dot{f}\| + \|\dot{f}_N\| \leq L_1 + \|\dot{f}_N\|,
\end{equation}
Applying the chain rule and the $L_N$-Lipschitz property of the SN-constrained network:
\begin{equation}
    \left\lVert\dot{f}_{N}(z)\right\rVert \leq L_N \left\lVert\dot{z}(t)\right\rVert.
\end{equation}
The input derivative vector is $\dot{z}(t) = [\dot{x}_1^\top, \dot{x}_2^\top, \dot{u}(t-T_s)^\top]^\top$. We analyze its components to establish a bound dependent on the error states.
First, $\|\dot{u}\| \le L_u$ by Assumption \ref{asm2_vec}.
Second, $\dot{x}_1 = \dot{q} = \dot{q}^\star - \dot{e}$.
Third, for $\dot{x}_2 = \ddot{q}$, we substitute the closed-loop error dynamics \eqref{eq: final_error_dyn} into $\ddot{q} = \ddot{q}^\star - \ddot{e}$:
\begin{equation}
\label{eq:x2_expansion}
    \dot{x}_2 = \ddot{q}^\star - (-k_d \dot{e} - k_p e - \tilde{f}) = \ddot{q}^\star + k_d \dot{e} + k_p e + \tilde{f}.
\end{equation}
\eqref{eq:x2_expansion} explicitly reveals the linear dependence of $\dot{z}$ on the error states. Applying the triangle inequality and consistent matrix norms, we bound $\|\dot{z}\|$:
\begin{equation}
\begin{split}
    \|\dot{z}\| \le & \underbrace{(L_u + \|\dot{q}^\star\| + \|\ddot{q}^\star\|)}_{C_z}
                    + \lambda_c \|e_{state}\| + \|\tilde{e}_{obs}\|,
\end{split}
\end{equation}
where $\lambda_c$ is obtained from control gain that satisfy $\|k_p e\|+ \|k_d \dot{e}\|\leq\lambda_c \|e_{state}\|$.

Substituting this bound back into the expression for $\|\dot{\Delta f}\|$:
\begin{equation}
    \|\dot{\Delta f}\| \le L_1 + L_N ( C_z + \lambda_c\|e_{state}\| + \|\tilde{e}_{obs}\|).
\end{equation}
The proof is concluded by defining $\gamma_e =  L_N \lambda_c$, $\gamma_{\tilde{e}} = L_N$, and $C_f = L_1 + L_N C_z$.
\hfill {$\blacksquare$}
\end{proof} 
 
\begin{rem}
This lemma shows that the residual derivative $\dot{\Delta f}$ is influenced by the tracking and observer errors through the network input dynamics, and it quantifies this influence via $\gamma_e$ and $\gamma_{\tilde e}$. Notably, both gains scale linearly with the network Lipschitz constant $L_N$, which will be used in the subsequent small-gain analysis.
\end{rem}

\subsection{Composite Stability Proof (UUB)}
\label{sec:composite_stability}

With the interconnection gain analysis in Lemma \ref{lem1}, we now prove the stability of the entire closed-loop system using a composite Lyapunov analysis.

\begin{theorem}\label{thm:Track_UUB}
(Closed-Loop UUB Stability). Given Assumptions \ref{asm1_vec}-\ref{asm3_vec}, and the controller \eqref{eq: control law} utilizing the $L_N$-Lipschitz network $f_N(z)$, the entire closed-loop error state $E = [e_{state}^\top, \tilde{e}_{obs}^\top]^\top$ is UUB.
\end{theorem}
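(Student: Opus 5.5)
We build a composite quadratic Lyapunov function for the cascade \eqref{eq:e_system}--\eqref{eq:e_tilde_system}, treat $\dot{\Delta f}$ as the interconnecting input, and bound it with Lemma~\ref{lem1}. Since $A_e$ and $A_{obs}$ are Hurwitz, there are symmetric positive definite $P_e, P_o$ with $A_e^\top P_e + P_e A_e = -Q_e$ and $A_{obs}^\top P_o + P_o A_{obs} = -Q_o$ for chosen $Q_e, Q_o \succ 0$. Set
\begin{equation*}
V(E) = e_{state}^\top P_e e_{state} + \mu\, \tilde{e}_{obs}^\top P_o \tilde{e}_{obs},
\end{equation*}
with a weight $\mu>0$ to be fixed later. $V$ is sandwiched between $\lambda_{\min}\|E\|^2$ and $\lambda_{\max}\|E\|^2$, with the constants taken from the eigenvalues of $P_e$ and $\mu P_o$.

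\textbf{Step 1 (tracking part).} Differentiate along \eqref{eq:e_system}. Using $\|\tilde f\|\le\|\tilde e_{obs}\|$ and $\|B_e\|=1$,
\begin{equation*}
\dot V_e \le -\lambda_{\min}(Q_e)\|e_{state}\|^2 + 2\|P_e\|\,\|e_{state}\|\,\|\tilde e_{obs}\|.
\end{equation*}

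\textbf{Step 2 (observer part).} Differentiate along \eqref{eq:e_tilde_system}, with $\|B_{obs}\|=1$, and insert the bound of Lemma~\ref{lem1}:
\begin{equation*}
\dot V_o \le -\lambda_{\min}(Q_o)\|\tilde e_{obs}\|^2 + 2\|P_o\|\,\|\tilde e_{obs}\|\big(\gamma_e\|e_{state}\| + \gamma_{\tilde e}\|\tilde e_{obs}\| + C_f\big).
\end{equation*}

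\textbf{Step 3 (combine).} Adding gives $\dot V \le -\xi^\top \Lambda \xi + 2\mu\|P_o\| C_f\|\tilde e_{obs}\|$, where $\xi=[\|e_{state}\|,\|\tilde e_{obs}\|]^\top$ and
\begin{equation*}
\Lambda=\begin{bmatrix}\lambda_{\min}(Q_e) & -(\|P_e\|+\mu\|P_o\|\gamma_e)\\ -(\|P_e\|+\mu\|P_o\|\gamma_e) & \mu(\lambda_{\min}(Q_o)-2\|P_o\|\gamma_{\tilde e})\end{bmatrix}.
\end{equation*}

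\textbf{Step 4 (small-gain condition).} We need $\Lambda\succ0$. This step is the main obstacle, because it is where the Lipschitz bound has to be used. It requires two things:
\begin{itemize}
\item $\lambda_{\min}(Q_o) > 2\|P_o\| L_N$, using $\gamma_{\tilde e}=L_N$;
\item $\lambda_{\min}(Q_e)\,\mu\big(\lambda_{\min}(Q_o)-2\|P_o\|L_N\big) > \big(\|P_e\|+\mu\|P_o\|L_N\lambda_c\big)^2$.
\end{itemize}
Both are explicit inequalities in $L_N$, and they hold when $L_N$ is small enough relative to the observer and controller margins. This is the small-gain condition the theorem implicitly assumes, and I would state it as a standing hypothesis of the theorem.

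One could instead try to satisfy it through high observer bandwidth. Under the scaling of \cite{gao2003scaling}, the ratio $\lambda_{\min}(Q_o)/\|P_o\|$ grows like $\omega_o$. However, the paper's $\tilde e_{obs}$ is unscaled, so I would verify this claim with the standard scaled coordinates $\mathrm{diag}(\omega_o^2 I,\omega_o I,I)\tilde e_{obs}$. If it fails, I fall back to the explicit bound on $L_N$.

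The weight $\mu$ also needs care. Choose $\mu$ small enough that the cross term does not grow faster than the diagonal entry; for instance, optimize the determinant condition over $\mu$. As $\mu\to0$ the right-hand side tends to $\|P_e\|^2$ while the left-hand side tends to $0$, so $\mu$ must balance rather than simply vanish. If no admissible $\mu$ exists, that signals the need for the bandwidth argument above.

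\textbf{Step 5 (UUB conclusion).} With $\lambda_\Lambda=\lambda_{\min}(\Lambda)>0$ and $c=2\mu\|P_o\|C_f$,
\begin{equation*}
\dot V\le -\lambda_\Lambda\|E\|^2 + c\|E\| < 0 \quad \text{whenever } \|E\|>c/\lambda_\Lambda.
\end{equation*}
Khalil's UUB theorem~\cite{khalil2002nonlinear} then gives ultimate boundedness, with ultimate bound $\sqrt{\lambda_{\max}/\lambda_{\min}}\;c/\lambda_\Lambda$. The bound scales with $C_f=L_1+L_NC_z$, which is where Assumptions~\ref{asm1_vec}--\ref{asm3_vec} enter.
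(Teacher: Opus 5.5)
Your proposal is correct and follows the same route as the paper. Both arguments build a composite quadratic Lyapunov function from the Lyapunov equations for the Hurwitz matrices $A_e$ and $A_{obs}$, use Lemma~\ref{lem1} to bound $\dot{\Delta f}$, impose a small-gain condition on the resulting quadratic form, and conclude UUB. Your weight $\mu$ and the test $\Lambda\succ0$ play the role of the paper's Young's-inequality constants $\delta_e,\delta_o,\delta_c$ and its inequalities $\lambda_e>\delta_e+c_o$, $\lambda_o>\epsilon_o+c_e$; the paper's freedom in choosing $Q_{obs}$ implicitly plays the role of $\mu$. Stating the small-gain inequality as an explicit standing hypothesis matches the paper, which also imposes it as a gain-selection requirement rather than deriving it from Assumptions~\ref{asm1_vec}--\ref{asm3_vec}.
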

\begin{proof}
We define a composite Lyapunov function candidate as the sum of two quadratic forms for the tracking and observation subsystems:
\begin{equation}
    V(E) = V_e(e_{state}) + V_{obs}(\tilde{e}_{obs}).
\end{equation}

\textbf{1. Tracking Subsystem ($V_e$):}
Let $V_e = e_{state}^\top P_e e_{state}$, where $P_e=P_e^\top > 0$ is the solution to the Lyapunov equation $A_e^\top P_e + P_e A_e = -Q_e$ for a chosen $Q_e=Q_e^\top > 0$. The time derivative of $V_e$ along the trajectories of \eqref{eq:e_system} is:
\begin{equation}
\begin{aligned}
    \dot{V}_e &= (A_e e_{state} + B_e \tilde{f})^\top P_e e_{state} \\
             &+ e_{state}^\top P_e (A_e e_{state} + B_e \tilde{f}) \\
             &= e_{state}^\top (A_e^\top P_e + P_e A_e) e_{state} + 2 e_{state}^\top P_e B_e \tilde{f}.
\end{aligned}
\end{equation}
By substituting the Lyapunov equation $A_e^\top P_e + P_e A_e = -Q_e$ into the first term and applying the Rayleigh-Ritz theorem ($e_{state}^\top Q_e e_{state} \ge \lambda_{\min}(Q_e)\|e_{state}\|^2$), we can bound the dissipation. Applying the Cauchy-Schwarz inequality to the second term, we obtain the upper bound:
\begin{equation}
    \dot{V}_e \le - \lambda_{\min}(Q_e) \|e_{state}\|^2 + 2 \|e_{state}\| \|P_e B_e\| \|\tilde{f}\|.
\end{equation}
Since $\|\tilde{f}\| \le \|\tilde{e}_{obs}\|$, we apply Young's inequality ($2ab \le \delta_e a^2 + \frac{1}{\delta_e} b^2$ for any $\delta_e > 0$) to the cross-term:
\begin{equation}
    2 \|e_{state}\| (\|P_e B_e\| \|\tilde{e}_{obs}\|) \le \delta_e \|e_{state}\|^2 + \frac{1}{\delta_e} \|P_e B_e\|^2 \|\tilde{e}_{obs}\|^2
\end{equation}
Substituting this back, we obtain the ISS-gain for $V_e$:
\begin{equation}
\label{eq:vdot_e}
    \dot{V}_e \le - (\lambda_{\min}(Q_e) - \delta_e) \|e_{state}\|^2 + c_e \|\tilde{e}_{obs}\|^2,
\end{equation}
where $c_e = \|P_e B_e\|^2 / \delta_e$.

\textbf{2. Observer Subsystem ($V_{obs}$):}
Similarly, let $V_{obs} = \tilde{e}_{obs}^\top P_{obs} \tilde{e}_{obs}$ with $A_{obs}^\top P_{obs} + P_{obs} A_{obs} = -Q_{obs}$. Its time derivative along \eqref{eq:e_tilde_system} is:
\begin{equation}
\begin{aligned}
    \dot{V}_{obs} \le - \lambda_{\min}(Q_{obs}) \|\tilde{e}_{obs}\|^2 + 2 \|\tilde{e}_{obs}\| \|P_{obs} B_{obs}\| \|\dot{\Delta f}\|.
\end{aligned}
\end{equation}
Let $\lambda_o = \lambda_{\min}(Q_{obs})$ and define $c_{pb} \triangleq \|P_{obs} B_{obs}\|$. Substituting the bound for $\dot{\Delta f}(t)$ from Lemma \ref{lem1} yields:
\begin{equation}
\label{eq:vdot_obs_expanded}
\begin{aligned}
    \dot{V}_{obs} &\le - \lambda_o \|\tilde{e}_{obs}\|^2 \\
    &\quad + 2 c_{pb} \|\tilde{e}_{obs}\| (\gamma_e \|e_{state}\| + \gamma_{\tilde{e}} \|\tilde{e}_{obs}\| + C_f) \\
    &= - (\lambda_o - 2 c_{pb} \gamma_{\tilde{e}}) \|\tilde{e}_{obs}\|^2 \\
    &\quad + (2 c_{pb} \gamma_e) \|\tilde{e}_{obs}\|\|e_{state}\| + (2 c_{pb} C_f) \|\tilde{e}_{obs}\|.
\end{aligned}
\end{equation}
We apply Young's inequality to the two remaining cross-terms with constants $\delta_o, \delta_c > 0$:
\begin{align*}
    (2 c_{pb} \gamma_e) \|\tilde{e}_{obs}\|\|e_{state}\| &\le \delta_o \|\tilde{e}_{obs}\|^2 + \frac{(c_{pb} \gamma_e)^2}{\delta_o} \|e_{state}\|^2 \\
    (2 c_{pb} C_f) \|\tilde{e}_{obs}\| &\le \delta_c \|\tilde{e}_{obs}\|^2 + \frac{(c_{pb} C_f)^2}{\delta_c}.
\end{align*}
Substituting these into \eqref{eq:vdot_obs_expanded} and grouping terms yields:
\begin{equation}
\label{eq:vdot_obs}
\begin{aligned}
    \dot{V}_{obs} \le & - (\lambda_o - 2 c_{pb} \gamma_{\tilde{e}} - \delta_o - \delta_c) \|\tilde{e}_{obs}\|^2 \\
                     & + c_o \|e_{state}\|^2 + C_o,
\end{aligned}
\end{equation}
where $c_o = (c_{pb} \gamma_e)^2 / \delta_o$ is the coupling gain from $e_{state}$ and $C_o = (c_{pb} C_f)^2 / \delta_c$ is a bounded constant.

\textbf{3. Composite System ($\dot{V}$):}
Combining the results from \eqref{eq:vdot_e} and \eqref{eq:vdot_obs}, the derivative of the composite Lyapunov function $V = V_e + V_{obs}$ is:
\begin{equation}
\label{eq:vdot_final}
\begin{aligned}
    \dot{V} \le & \left( - (\lambda_e - \delta_e) + c_o \right) \|e_{state}\|^2 \\
                 & + \left( - (\lambda_o - \epsilon_o) + c_e \right) \|\tilde{e}_{obs}\|^2 + C_o,
\end{aligned}
\end{equation}
where $\lambda_e = \lambda_{\min}(Q_e)$ and $\epsilon_o = (2 c_{pb} \gamma_{\tilde{e}} + \delta_o + \delta_c)$.

Inequality~\eqref{eq:vdot_final} explicitly represents a Lyapunov-based small-gain condition for the interconnected system. The terms $c_e$ and $c_o$ characterize the fixed interconnection gains, while $\lambda_e$ and $\lambda_o$ quantify the intrinsic dissipation provided by the control and observer error dynamics, respectively. 

The small-gain condition requires that the stabilizing dissipation of each subsystem dominates the energy injected through the interconnection, which leads to the inequalities
\begin{equation}
    \lambda_e > \delta_e + c_o \quad \text{and} \quad \lambda_o > \epsilon_o + c_e
\end{equation}
These inequalities impose explicit lower bounds on the required stabilization gains. Since $Q_e$ and $Q_{\mathrm{obs}}$ depend on the design parameters $(k_p,k_d,\beta_k)$, the gains can be selected to satisfy the above bounds without requiring either high-gain control or high-gain observer. Define $\alpha_e = \lambda_e - \delta_e - c_o > 0$ and $\alpha_o = \lambda_o - \epsilon_o - c_e > 0$. Then, the derivative of the composite Lyapunov function satisfies
\begin{equation}
    \dot{V} \le - \alpha_e \|e_{state}\|^2 - \alpha_o \|\tilde{e}_{obs}\|^2 + C_o \le - \alpha \|E\|^2 + C_o,
\end{equation}
where $\alpha = \min(\alpha_e, \alpha_o) > 0$. This implies that $\dot{V}$ is negative whenever $\|E\| > \sqrt{C_o / \alpha}$, which proves that the entire closed-loop error state $E$ is UUB. \hfill {$\blacksquare$}
\end{proof}

\begin{rem}
The small-gain inequalities $\lambda_e > \delta_e + c_o$ and $\lambda_o > \epsilon_o + c_e$ explicitly connect the neural network Lipschitz constant $L_N$ to the interconnection strength of the closed loop. By decreasing $L_N$, the required stabilization gains for both the controller and the observer are reduced. Thus, $L_N$ acts as a tuning knob that trades learning expressiveness for robustness, enabling UUB stability with moderate gains.

\end{rem}

\begin{rem}
{\textbf{Tuning guideline:}  
We adopt a two-stage tuning strategy: first, select $L_N$ based on the desired balance between accuracy and robustness; then, tune the observer bandwidth $\omega_o$ and controller gains ($k_p$, $k_d$) to compensate for the bounded residual disturbance. This sequential design ensures that the ESO operates within its compensation capability, consistent with the small-gain stability condition.}
\end{rem}
\section{EXPERIMENTS}
\label{sec:experiments}

We use a quadrotor UAV as the experimental platform due to its sensitivity to aerodynamic disturbances (\emph{e.g.}, ground effect), making altitude control challenging. To ensure fair comparison, the same controller is used for the $x$--$y$ axes, and performance is evaluated on the $z$ (altitude) axis. A base neural network pre-trained on flat-terrain landing is used in all experiments. Experiment~1 evaluates in-distribution performance, Experiment~2 tests robustness under OOD conditions, and Experiment~3 studies adaptation via retraining with new domain data.
\subsection{Experimental System Dynamics}
The dynamics of the quadrotor are described by the standard Newton-Euler equations \cite{chen2024quadrotor}:
\begin{subequations}
\begin{align}
    \dot{{P}}^I &= {V}^I, & m_v \dot{{V}}^I &= m_v {g}^I + {R}_{IB}{F}^B + {d}^I \label{eq_dyn_vel_dot}, \\
    \dot{{R}}_{IB} &= {R}_{IB}{\Omega}_{\times}^B, & {I}_{{v}} \dot{{\Omega}}^B &= -{\Omega}_{\times}^B {I}_{{v}}{\Omega}^B + {M}^B, \label{eq_dyn_omega_dot}
\end{align}
\end{subequations}
where the translation dynamics are described in \eqref{eq_dyn_vel_dot} and the rotation dynamics in \eqref{eq_dyn_omega_dot}. \({P}^I=[X, Y, Z]^\top\), \({V}^I=[V_x, V_y, V_z]^\top\), and \({g}^I=[0, 0, -g]^\top\) are the position, velocity, and gravity vectors in inertial frame. \({\Omega}^B=[p, q, r]^\top\) is the angular velocity vector in body frame. \({R}_{IB}\) represents the rotation matrix from body to inertial frame; \({\Omega}^B_{\times}\) denotes the skew-symmetric matrix; \(m_v\) and \({I}_{{v}}\) represent the mass and inertial matrix. The terms ${F}^B$ and ${M}^B$ are the total control force and moment vectors from the actuators. ${d}^I$ is the lumped disturbance, including model uncertainty and external disturbance such as ground effect.

Our controller design focuses on the translational dynamics \eqref{eq_dyn_vel_dot}. The mapping is established by setting the generalized coordinates $q = {P}^I$. This specialization yields the following equivalences: $M(q) = m_v I$, $C(q, \dot{q}) = 0$, $G(q) = -m_v {g}^I$, the control input $u = {R}_{IB}{F}^B$, and the disturbance $d(q, \dot{q},t) = {d}^I$.
\subsection{Experimental Setup and Baselines}
The experimental platform consists of a ModalAI Seeker drone {equipped with a VOXL Flight Deck running the PX4 autopilot firmware}. An OptiTrack motion capture system provides ground-truth position and attitude estimates. The {outer-loop} control pipeline operates {at a frequency of 50 Hz} on a ground station computer (Intel i9 processor), communicating offboard control commands via MAVSDK-Python.

The Neural-ESO outer-loop controller (Section~\ref{sec:Learning based ESO}) first computes the desired acceleration $a_d$, then forms the desired force vector $F_d = m_v(a_d - g^I)$, which yields the required total thrust $T_d = \|F_d\|$, following the commonly used pipeline in~\cite{lee2010geometric}.
 A desired rotation matrix $R_d$ is then constructed such that its $z$-axis aligns with $F_d$ and its $x-y$ axes are oriented according to a desired yaw angle $\psi_d$. The final desired roll ($\phi_d$) and pitch ($\theta_d$) angles are extracted from $R_d$.
 {These attitude references, along with the desired yaw $\psi_d$ and thrust $T_d$, are mapped directly to the PX4 offboard attitude setpoint MAVLink messages.}

\subsection{Data Collection and Training Details}
We implement the network in PyTorch. Training data is collected under a conventional ESO \eqref{eq: ESO} following the supervised setup in Section~\ref{sec:learning}. The input vector is ${z}_k = [q_k, \dot{q}_k, T_{k-1}, \phi_k, \theta_k]$, including position, velocity, thrust, and attitude angles. To avoid noise amplification, we do not use acceleration measurements \cite{NeuralLander,he2025self}. Both inputs and labels are normalized using a z-score scaler, and the dataset is split into 80\% training and 20\% validation. The DNN has 4 fully connected layers with ReLU activations (5 $\to$ 20 $\to$ 25 $\to$ 10 $\to$ 1), trained using Adam ($10^{-3}$) and MSE loss for {100} epochs.
\subsection{Experiment 1: Normal Landing Subject to Ground Effect}
\label{sec:exp_hover}
As shown in Fig.~\ref{fig:cover}(a), the first experiment evaluates normal landing under ground effect following~\cite{NeuralLander}, with training and testing conducted in the same environment. The results are shown in Fig.~\ref{fig:hover_pos}. Due to the protective frame, the minimum achievable altitude is approximately $0.1$~m, which explains the reference level in all plots.

\begin{figure}[htbp]
    \centering
    \includegraphics[width=0.85\columnwidth]{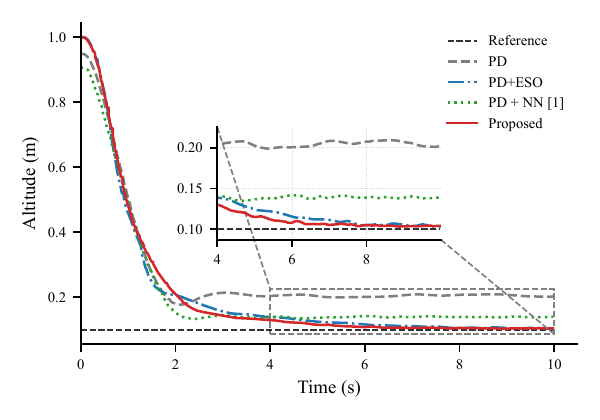}
    \caption{{Normal landing performance comparison: both PD+ESO and proposed method achieve small steady-state errors, while the proposed method achieves a lower steady-state error and faster settling time.}}
    \label{fig:hover_pos}
\end{figure}

\begin{table}[htbp]
\centering
\caption{Statistical Analysis of Steady-State Tracking Error for Normal Landing under Ground Effect (5 Trials)}
\label{tab:nominal_tracking}
\begin{tabular}{lcccc}
\toprule
\textbf{Metric} & \textbf{PD} & \textbf{PD+NN [1]} & \textbf{PD+ESO} & \textbf{Proposed} \\
\midrule
Mean [cm] & 10.95 & 2.49 & 1.84 & \textbf{0.37} \\
Std Dev [cm] & 0.57 & 0.15 & 1.89 & \textbf{0.12} \\
\bottomrule
\end{tabular}
\end{table}
{The statistical results of steady-state tracking error are summarized in Table~\ref{tab:nominal_tracking}. The nominal PD controller exhibits a large mean error of $10.95$~cm. The PD+NN method~\cite{NeuralLander} reduces this to $2.49$~cm, but a noticeable residual error remains. In contrast, the proposed Neural-ESO achieves the highest precision, with the smallest mean error of $0.37$~cm and the lowest standard deviation ($0.12$~cm). As shown in Fig.~\ref{fig:hover_pos}, Neural-ESO also demonstrates faster settling and smoother transient response than PD+ESO, indicating the effectiveness of incorporating learning in improving disturbance rejection.}
\subsection{Experiment 2: Landing in OOD Scenario}
\label{sec:exp_trajectory}
To evaluate the learned network's generalization and the robustness of ours under OOD conditions, we introduce an inclined slope near the landing zone (shown in Fig.~\ref{fig:cover} (b)). We hypothesize that this geometric mismatch alters the ground-effect characteristics, and challenges the neural network trained in the normal environment of Experiment 1.

As shown in Fig.~\ref{fig:exp2}, PD+NN~\cite{NeuralLander} exhibits substantially larger oscillations and steady-state error than in Experiment~1, indicating degraded generalization. In contrast, ours still achieves a successful landing: although the neural prior becomes less accurate, the corrective pathway compensates for the resulting residual disturbance. 

\begin{figure}[htbp]
    \centering
    \includegraphics[width=0.85\columnwidth]{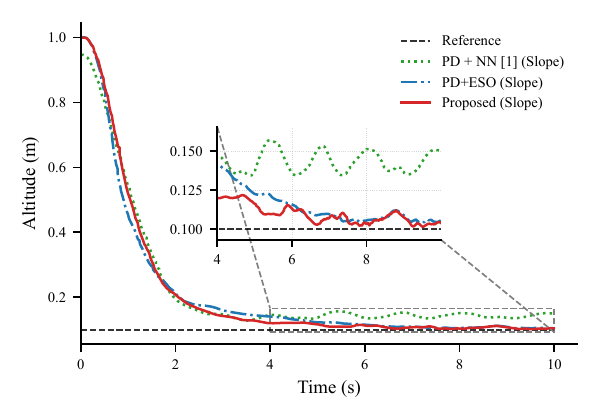}
    \caption{OOD testing with the influence of the slope. PD+NN exhibits much larger oscillations and steady-state error.}
    \label{fig:exp2}
\end{figure}

\begin{table}[htbp]
\centering
\caption{Statistical Analysis of Steady-State Tracking Error\\for Landing in OOD Scenario (5 trials)}
\label{tab:ood_tracking}
\begin{tabular}{lccc}
\toprule
\textbf{Metric} & \textbf{PD+NN [1]} & \textbf{PD+ESO} & \textbf{Proposed} \\
\midrule
Mean [cm] & 3.74 & \textbf{0.66} & 1.53 \\
Std Dev [cm] & 1.88 & \textbf{0.10} & 2.07 \\
\bottomrule
\end{tabular}
\end{table}

{Experiment~2 evaluates OOD robustness using the same model and parameters as in the in-distribution setting. Under distribution shift, the learned prior becomes inaccurate, so stable and safe operation is critical. As shown in Table~\ref{tab:ood_tracking}, PD+NN~\cite{NeuralLander} degrades significantly with large oscillations, while PD+ESO performs best, as it is unaffected by model learning. The proposed method shows higher error ($1.53$~cm) but maintains stable tracking via the corrective pathway, demonstrating robustness to learning errors.}

{The following sensitivity analysis illustrates the trade-off between in-distribution accuracy and OOD robustness governed by the Lipschitz constant. Table~\ref{tab:ood_tracking} only reports results with $L_N=1.0$, which balances high in-distribution accuracy and acceptable robustness. However, a smaller value (e.g., $L_N=0.5$) can further improve OOD performance, slightly surpassing PD+ESO, at the cost of reduced in-distribution accuracy. Experiment~3 will additionally show that retraining in the new domain improves performance beyond PD+ESO.}
\subsection{{Sensitivity Analysis of the Lipschitz Constant}}

\begin{table}[htbp]
\centering
\caption{Ablation Study on Lipschitz Constant $L_N$ across 5 Trials}
\label{tab:lip_change_ablation}
\begin{tabular}{lcccc}
\toprule
\textbf{Metric} & $\mathbf{L_N=0.2}$ & $\mathbf{L_N=0.5}$ & $\mathbf{L_N=1.0}$ & $\mathbf{L_N=2.0}$ \\
\midrule
\multicolumn{5}{l}{\textit{Nominal Setup}} \\
Mean [cm] & 2.73 & 0.71 & \textbf{0.37} & 1.43 \\
Std Dev [cm] & 2.42 & 0.11 & \textbf{0.12} & 2.11 \\
\midrule
\multicolumn{5}{l}{\textit{OOD (Exp.~2)}} \\
Mean [cm] & 1.69 & \textbf{0.63} & 1.53 & 2.59 \\
Std Dev [cm] & 2.01 & \textbf{0.12} & 2.07 & 2.49 \\
\bottomrule
\end{tabular}
\end{table}

{Table~\ref{tab:lip_change_ablation} shows that the Lipschitz constant $L_N$ governs the trade-off between model expressiveness and robustness. A small $L_N$ (\emph{e.g.}, $0.2$) leads to underfitting and poor in-distribution performance, while a large $L_N$ (\emph{e.g.}, $2.0$) causes instability due to excessive sensitivity to noise and extrapolation errors. Importantly, different choices of $L_N$ reflect different design priorities. For example, $L_N=0.5$ achieves the best robustness under OOD conditions (Table~\ref{tab:lip_change_ablation}), while $L_N=1.0$ provides the best in-distribution accuracy ($0.37 \pm 0.12$~cm). In this work, we select $L_N=1.0$ to prioritize high-precision in-distribution tracking while maintaining acceptable robustness under distribution shift.}

{Across all configurations with $L_N < 2.0$, the proposed method consistently outperforms PD+NN, demonstrating the effectiveness of ESO-based compensation in mitigating learning-induced errors.}

\subsection{Experiment 3: {OOD} High-Speed Near-Ground Maneuvers}
The final experiment pushes the Neural-ESO framework to its dynamic limits by executing high-speed maneuvers near the ground. In this scenario, the interaction between the high-velocity downwash and the ground surface transitions from structured ground effects to non-stationary turbulence. This scenario represents a much more severe OOD shift, posing a significant challenge to all compared approaches.

A Lemniscate trajectory was executed in the x-y plane, with $A_x = 0.8$ m, $A_y = 1.0$ m, and $T = 6.0$ s. All the comparisons are shown in Fig. \ref{fig:tracking_perf} and Fig. \ref{fig:tracking_boxplot}. Fig. \ref{fig:tracking_perf} shows comparable tracking performance for the x-y axes, as the same controller is utilized for planar motion across all cases.

\subsubsection{Limitations of Baseline Learning-based Control} {Note that all base neural networks used before retraining are trained on the normal landing case.} As shown in Fig.~\ref{fig:tracking_perf} and Fig.~\ref{fig:tracking_boxplot}, the PD+NN (w/o TDR) exhibits substantial altitude deviations, with tracking error deteriorating to $5.47 \pm 0.56$~cm, worse than the PD controller ($4.08 \pm 1.03$~cm). This highlights poor generalization under severe OOD conditions and indicates that an inaccurate learned prior can degrade performance. In particular, a model trained on low-speed landing data provides little benefit, and may even harm performance, in high-speed near-ground maneuvers. 

\subsubsection{Severe Domain Shift Challenge and Robust Compensation}
Under this extreme domain shift, Neural-ESO without TDR has an RMSE of $4.09 \pm 0.21$~cm, underperforming the PD+ESO baseline ($1.91 \pm 0.83$~cm), which confirms the severity of the domain shift. Nevertheless, it still outperforms PD and PD+NN (w/o TDR), indicating that the ESO continues to compensate for residual disturbances even when the neural prior is inaccurate.

{To further highlight the necessity of this compensation, we compare against a recent neural-observer method \cite{liu2023cascade}, which adopts a single-pathway architecture that replaces the ESO with a neural network, but lacks online ESO-based compensation and any mechanism for further adaptation. Under the same OOD condition, this method fails to maintain stability and cannot complete the flight trajectory, underscoring the safety advantage of our dual-pathway design.}

\subsubsection{Performance Boosting via Total Disturbance Retraining}

To mitigate performance degradation under severe domain shift, we perform offline TDR using newly collected flight data. Specifically, the TDR uses new-domain data collected from 5 OOD flight cycles, totaling 30 seconds of data. Although the total amount of data differs from the nominal case, the 80\%/20\% train/test split and the training pipeline remain unchanged. Note that the corrective pathway enables safe data collection. The total disturbance reconstructed by the corrective pathway ($\hat{f} = f_N + \Delta \hat{f}$) is directly used as the training label to retrain the base network, enabling it to capture the new domain's aerodynamic effects. The performance across $5$ flights is summarized in Fig.~\ref{fig:tracking_boxplot}. With TDR, the proposed framework achieves the lowest RMSE of ${0.89 \pm 0.52}$~cm. Compared to PD+ESO ($1.91 \pm 0.83$~cm), this corresponds to a $53.4\%$ reduction in tracking error.

\begin{figure}[htbp]
    \centering
    \includegraphics[width=0.7\columnwidth]{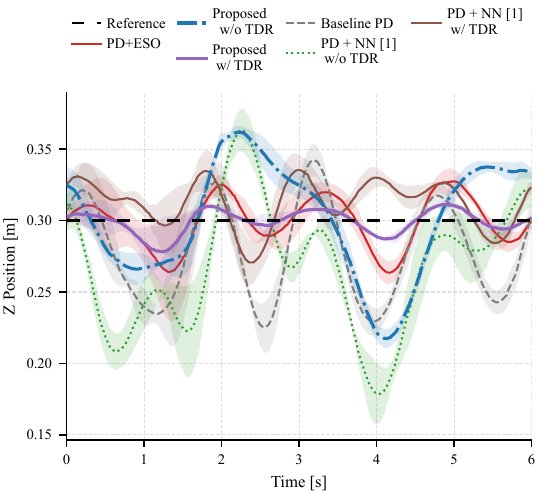}
    \caption{OOD testing under high-speed near-ground maneuvers across five flight cycles. Curves show mean trajectories with shaded $\pm1$ standard deviation regions. For safety, the z-axis reference is set higher than in the previous scenarios.}
    \label{fig:tracking_perf}
\end{figure}

\begin{figure}[htbp]
    \centering
    \includegraphics[width=0.7\columnwidth]{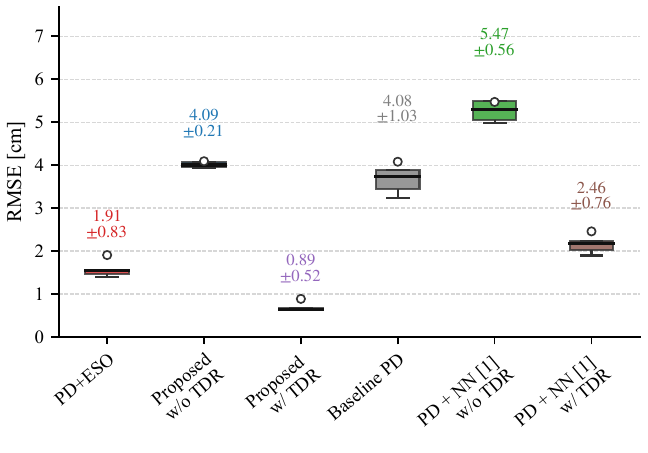}
    \caption{Z-axis tracking RMSE across five flight cycles. Boxes show the interquartile range (IQR), black lines the medians, and white circles the means. Annotations indicate mean $\pm$ standard deviation (cm) for each algorithm across all cycles.}
    \label{fig:tracking_boxplot}
\end{figure}

\section{CONCLUSION}
\label{sec:Conslusion and future work}
This letter presents Neural-ESO, a dual-pathway architecture for disturbance-rejection control that combines a Lipschitz-bounded predictive neural prior with an ESO for online correction. We show a rigorous stability analysis and show that the Lipschitz property of the learned prior provides a sufficient condition to preserve closed-loop robustness in the presence of approximation errors and disturbances. Experiments on a quadrotor UAV under challenging ground-effect disturbances further validate the proposed framework, demonstrating accuracy-robustness trade-off and improved operational reliability across varied conditions, including OOD scenarios, compared with state-of-the-art baselines.
\vspace{-5mm}
\bibliographystyle{IEEEtran}
\bibliography{references}

@article{yu2026parameter,
  title={Parameter-Learning Active Disturbance Rejection Controller for a Novel Variable Stiffness Gripper},
  author={Yu, Ziqing and Fu, Jiaming and Zhang, Fan and Chen, Jinfeng and Gan, Dongming},
  journal={Journal of Mechanisms and Robotics},
  pages={1--11},
  year={2026}
}

@book{khalil2002nonlinear,
  title={Nonlinear Systems},
  author={Khalil, Hassan K and Grizzle, Jessy W},
  volume={3},
  year={2002},
  publisher={Prentice hall Upper Saddle River, NJ}
}

@article{liu2023cascade,
  title={Cascade {ADRC} with neural network-based {ESO} for hypersonic vehicle},
  author={Liu, Lei and Liu, Yongxiong and Zhou, Lilin and Wang, Bo and Cheng, Zhongtao and Fan, Huijin},
  journal={Journal of the Franklin Institute},
  volume={360},
  number={12},
  pages={9115--9138},
  year={2023},
  publisher={Elsevier}
}

@inproceedings{li2023neural,
  title={Neural network learning of robot dynamic uncertainties and observer-based external disturbance estimation for impedance control},
  author={Li, Teng and Badre, Armin and Taghirad, Hamid D and Tavakoli, Mahdi},
  booktitle={2023 IEEE/ASME International Conference on Advanced Intelligent Mechatronics},
  pages={591--597},
  year={2023},
  organization={IEEE}
}

@article{dawson2023safe,
  title={Safe control with learned certificates: A survey of neural {Lyapunov}, barrier, and contraction methods for robotics and control},
  author={Dawson, Charles and Gao, Sicun and Fan, Chuchu},
  journal={IEEE Transactions on Robotics},
  volume={39},
  number={3},
  pages={1749--1767},
  year={2023},
  publisher={IEEE}
}

@inproceedings{he2025self,
  title={Self-supervised meta-learning for all-layer {DNN}-based adaptive control with stability guarantees},
  author={He, Guanqi and Choudhary, Yogita and Shi, Guanya},
  booktitle={2025 IEEE International Conference on Robotics and Automation},
  pages={6012--6018},
  year={2025},
  organization={IEEE}
}

@inproceedings{lee2010geometric,
  title={Geometric tracking control of a quadrotor {UAV} on {SE}(3)},
  author={Lee, Taeyoung and Leok, Melvin and McClamroch, N Harris},
  booktitle={49th IEEE Conference on Decision and Control},
  pages={5420--5425},
  year={2010},
  organization={IEEE}
}

@article{bauersfeld2021neurobem,
  title={NeuroBEM: Hybrid Aerodynamic Quadrotor Model},
  author={Bauersfeld, Leonard and Kaufmann, Elia and Foehn, Philipp and Sun, Sihao and Scaramuzza, Davide},
  journal={Proceedings of Robotics: Science and Systems XVII},
  pages={42},
  year={2021},
  publisher={Robotics: Science and Systems Foundation}
}

@article{chen2024quadrotor,
  title={Quadrotor Fault-Tolerant Control at High Speed: A Model-Based Extended State Observer for Mismatched Disturbance Rejection Approach},
  author={Chen, Jinfeng and Zhang, Fan and Hu, Bin and Lin, Qin},
  journal={IEEE Control Systems Letters},
  year={2024},
  publisher={IEEE}
}

@INPROCEEDINGS{NeuralLander,
  author={Shi, Guanya and Shi, Xichen and O’Connell, Michael and Yu, Rose and Azizzadenesheli, Kamyar and Anandkumar, Animashree and Yue, Yisong and Chung, Soon-Jo},
  booktitle={2019 International Conference on Robotics and Automation}, 
  title={Neural Lander: Stable Drone Landing Control Using Learned Dynamics}, 
  year={2019},
  volume={},
  number={},
  pages={9784-9790},
  doi={10.1109/ICRA.2019.8794351}}

@inproceedings{gao2003scaling,
  title={Scaling and bandwidth-parameterization based controller tuning},
  author={Gao, Zhiqiang},
  booktitle={Proceedings of the 2003 American Control Conference, 2003.},
  pages={4989--4996},
  year={2003},
  organization={IEEE}
}

@inproceedings{zhang2016active,
  title={An active disturbance rejection control solution for the two-mass-spring benchmark problem},
  author={Zhang, Han and Zhao, Shen and Gao, Zhiqiang},
  booktitle={2016 American Control Conference},
  pages={1566--1571},
  year={2016},
  organization={IEEE}
}

@ARTICLE{Han2009,
  author={Han, Jingqing},
  journal={IEEE Transactions on Industrial Electronics}, 
  title={From {PID} to Active Disturbance Rejection Control}, 
  year={2009},
  volume={56},
  number={3},
  pages={900-906},
  doi={10.1109/TIE.2008.2011621}}

@article{brunke2022safe,
  title={Safe learning in robotics: From learning-based control to safe reinforcement learning},
  author={Brunke, Lukas and Greeff, Melissa and Hall, Adam W and Yuan, Zhaocong and Zhou, Siqi and Panerati, Jacopo and Schoellig, Angela P},
  journal={Annual Review of Control, Robotics, and Autonomous Systems},
  volume={5},
  number={1},
  pages={411--444},
  year={2022},
  publisher={Annual Reviews}
}

@article{o2022neural,
  title={Neural-fly enables rapid learning for agile flight in strong winds},
  author={O’Connell, Michael and Shi, Guanya and Shi, Xichen and Azizzadenesheli, Kamyar and Anandkumar, Anima and Yue, Yisong and Chung, Soon-Jo},
  journal={Science Robotics},
  volume={7},
  number={66},
  pages={eabm6597},
  year={2022},
  publisher={American Association for the Advancement of Science}
}

@INPROCEEDINGS{zhang2024,
  author={Zhang, Fan and Chen, Jinfeng and Hu, Yu and Gao, Zhiqiang and Lv, Ge and Lin, Qin},
  booktitle={2024 IEEE 20th International Conference on Automation Science and Engineering}, 
  title={Disturbance Rejection-Guarded Learning for Vibration Suppression of Two-Inertia Systems}, 
  year={2024},
  volume={},
  number={},
  pages={3418-3424},
  doi={10.1109/CASE59546.2024.10711378}}
\end{document}